\title{Efficient Distributed SGD with Variance Reduction}
\author[S. De et al.]
       {Soham De and Tom Goldstein\\
       Department of Computer Science, University of Maryland, College Park, USA\\
       \texttt{\{sohamde, tomg\}@cs.umd.ed}}
\begin{document}
\maketitle


\begin{abstract}
Stochastic Gradient Descent (SGD) has become one of the most popular optimization methods for training machine learning models on massive datasets. However, SGD suffers from two main drawbacks: 
\begin{enumerate*}[label=(\roman*)]
  \item The noisy gradient updates have high variance, which slows down convergence as the iterates approach the optimum, and
  \item SGD scales poorly in distributed settings, typically experiencing rapidly decreasing marginal benefits as the number of workers increases.
\end{enumerate*}
In this paper, we propose a highly parallel method, CentralVR, that uses error corrections to reduce the variance of SGD gradient updates, and scales linearly with the number of worker nodes. CentralVR enjoys low iteration complexity, provably \emph{linear} convergence rates, and exhibits linear performance gains up to hundreds of cores for massive datasets. We compare CentralVR to state-of-the-art parallel stochastic optimization methods on a variety of models and datasets, and find that our proposed methods exhibit stronger scaling than other SGD variants.
\end{abstract}

\thispagestyle{firststyle}

\section{Introduction}

Stochastic gradient descent (SGD) \cite{robbins1951stochastic} is a general approach for solving minimization problems where the objective function decomposes into a sum over many terms.  Such a function has the form
\begin{equation}
\label{eq:objective}
\min_x f(x), \hspace{2mm} f(x) = \frac{1}{n}\sum_{i=1}^n f_i(x),
\end{equation}
where each $f_i: \mathbb{R}^d \rightarrow \mathbb{R}.$   This encompasses a
wide range of problem types, including matrix completion and graph cuts~\cite{recht2011hogwild}.  However, the most popular use of SGD is for problems
where the summation in \eqref{eq:objective} is over a large number of elements in a dataset.
For example, each $f_i(x)$ could measure how well a certain model with parameter vector $x$ explains or classifies the $i$-th entry in a large dataset. 

For problems where each term in \eqref{eq:objective} corresponds to a single data observation, SGD selects
a single data index $i_k$ on each iteration $k$, approximates the gradient of the
objective as $ g^k  = \nabla f_{i_k}(x^k) \approx  \nabla f(x^k),$ and then performs the
approximate gradient update
$$x^{k+1} = x^k - \eta g^k.$$
Typically, $i_k$ is chosen uniformly at random from $\{1, 2, \cdots, n\}$ on each iteration $k$, thus making the gradient approximation unbiased. More precisely, we have $\mathbb{E} \big[\nabla f_{i_k}(x^k) ] =  \nabla f(x^k),$ where the expectation is with respect to the random index $i_k$ of the sampled data.
These approximate gradient updates are much cheaper than true gradient update
steps, which is highly advantageous when $x^k$ is far from the true solution.

A major drawback of SGD is that it is an inherently sequential
algorithm. For truly large datasets, parallel or distributed algorithms are
vital, driving interest in SGD variants that parallelize over massive
distributed datasets. While there has been quite a bit of
recent work in the area of parallel asynchronous SGD algorithms
\cite{recht2011hogwild, dean2012large, lian2015asynchronous, agarwal2011distributed,
li2014communication, shamir2014distributed, zinkevich2009slow,
bertsekas1989parallel, zinkevich2010parallelized, zhang2015deep}, these
methods typically experience substantially reduced marginal benefit as the
number of worker nodes increase over a certain limit. Thus, while some of
these algorithms scale linearly when the number of worker nodes is small,
they are less effective when the data is distributed over hundreds or
thousands of nodes.

Moreover, most research in parallel or distributed SGD methods has been focused on the
parameter server model of computation \cite{recht2011hogwild, dean2012large, agarwal2011distributed, li2014communication, zinkevich2009slow}, where each
update to the centrally stored parameter vector requires a communication phase
between the local node and the central server. However, SGD methods tend to
become unstable with infrequent communication, and there has been less work in
the truly distributed setting where communication costs are high
\cite{zinkevich2010parallelized, zhang2015deep, mokhtari2016dsa}.  In this paper, we propose to boost the scalability of
stochastic optimization algorithms using {\em variance reduction} techniques, yielding
SGD methods that scale linearly over hundreds or thousands of nodes and can train models on massive datasets
without the slowdown that existing stochastic methods experience.


\subsection{Background}
%
%

Recently there has been a lot of interest in methods that can control the gradient noise in SGD. These approaches mitigate the effects of vanishing step sizes, and yield methods that enjoy faster convergence rates both theoretically and empirically. One approach to control the gradient noise is to grow the batch size over time \cite{byrd2012sample, de2017automated, friedlander2012hybrid}. Another conceptually related approach is importance sampling, i.e., choosing training points such that the variance in the gradient estimates is reduced \cite{bouchard2015accelerating, csiba2016importance, needell2014stochastic}. Recently, variance reduction methods have also gained popularity as an alternative to classical SGD.

Variance reduction (VR) methods \cite{johnson2013accelerating,
defazio2014saga, reddi2015variance, roux2012stochastic, defazio2014finito,
konevcny2014ms2gd, konevcny2013semi, xiao2014proximal, wang2013variance, harikandeh2015stopwasting} reduce the variance in the stochastic gradient estimates, and are able to maintain a
large constant step size to achieve fast convergence to high accuracy.
VR methods exploit the fact that gradient errors are highly correlated
between different uses of the same function $f_{i_k}.$  This is done by subtracting an error
correction term from $\nabla f_{i_k} (x^k)$ that estimates the gradient error from
the most recent use of $f_{i_k}$. Thus the stochastic gradients used by VR
methods have the form
\begin{align}
g^k=  \underbrace{\nabla f_{i_k}(x^k)}_{\text{approximate gradient}} -
\underbrace{\nabla f_{i_k}(y) + \widetilde {g_y}}_{\text{error correction
term}}, \label{vr}
\end{align}
where $y$ is an old iterate, and $\widetilde g_y$ is an approximation of the true gradient $\nabla f(y).$ Typically, $\tilde g_y$ can be kept fixed over an epoch or can be updated cheaply on every iteration.  As an example, the SVRG algorithm \cite{johnson2013accelerating} has an update rule of the form
\begin{align}
x^{k+1} = x^k - \eta \Big(\nabla f_{i_k}(x^k) - \nabla f_{i_k}(y) + \nabla f(y) \Big), \label{svrg}
\end{align}
where $y$ is chosen to be a recent iterate from the algorithm history and is fixed over 1 or 2 epochs, and $\tilde g_y = \nabla f(y)$ is the true gradient of $f$ at $y$, which needs to be computed once every 1 or 2 epochs. Another popular VR algorithm, SAGA  \cite{defazio2014saga},  uses the following corrected gradient approximation
\begin{align}
g^k =   \nabla f_{i_k}(x^k) - \nabla f_{i_k}(\phi_{i_k}) + \frac{1}{n} \sum_{j=1}^n \nabla f_j (\phi_j), \label{saga}
\end{align}
where each $\nabla f_j (\phi_j)$ denotes the most recent value of $\nabla f_j$
and $\phi_j$ denotes the iterate at which the most recent $\nabla f_j$ was
evaluated.  In this case $\widetilde g_y$ is the average of the
$\nabla f_j (\phi_j)$ values for all $i\in \{1,2,\cdots,n\}.$ This error correction term reduces the variance in the stochastic gradients, and thus ensures fast convergence. Notice that for both the algorithms, SVRG and SAGA, if $i_k$ is chosen uniformly at random from $\{1,2,\cdots,n\}$, and conditioning on all $x$, we have $\mathbb{E} \big[g^k \big] = \nabla f (x^k)$. Thus, the error correction term has expected value 0 and the approximate gradient $g^k$ is unbiased for both SVRG and SAGA.

Most work on VR methods has focused on studying their faster convergence rates and better stability properties when compared to classical SGD in the sequential setting. While there have been a few recent papers on parallelizing VR methods, these methods scale poorly in distributed settings and all prior work that we know of has focussed on small-scale parallel or shared memory settings, with the data distributed over 10 or 20 nodes \cite{reddi2015variance, mania2015perturbed, pan2016cyclades}. These parallel algorithms use a parameter server model of computation, and are based on the assumption that communication costs are low, which may not be true in large-scale heterogenous distributed computing environments. The fact that the error correction term reduces the variance in the stochastic gradients, however, seems to indicate that distributed VR methods could be helpful in distributed settings. In particular, the variance-reduced gradients would help in dealing with the problems of instability and slower convergence faced by regular stochastic methods when the frequency of communication between the server and the local nodes is increased.

\subsection{Contributions}
In this work, we use variance reduction to dramatically boost the performance
of SGD in the highly distributed setting. We do this by exploiting the
dependence of VR methods on the gradient correction term  $\widetilde {g_y}.$
We allow many local worker nodes to run simultaneously, while communicating with the
central server only through the exchange of this central error correction
term and the locally stored iterates.  The proposed schemes allow many asynchronous processes to work towards
a central solution with minimal communication, while simultaneously
benefitting from the fast convergence provided by VR.

This work has four main contributions:
\begin{itemize}
 \item First, we present a new VR
algorithm CentralVR, built on SAGA, specifically designed such that it can be easily distributed. We then propose synchronous (CentralVR-Sync) and asynchronous (CentralVR-Async) variations of CentralVR which can linearly scale up over massive datasets using hundreds of cores, and are robust to communication delays between the server and the worker nodes.

\item Second, we theoretically study the convergence of CentralVR when $\widetilde
{g_y}$ is only updated periodically (as in the distributed setting), and
prove linear convergence of the method with constant step sizes. 


\item Third, we propose distributed versions of the existing popular VR algorithms, SVRG and SAGA, that are robust to high communication latency between the worker nodes and the central server, and can scale over large distributed settings ranging over hundreds of nodes. Table \ref{dist_algos} summarizes the distributed algorithms proposed in the paper and their storage and computation requirements.


\item Finally, we present empirical results over different models and datasets that show that these distributed
algorithms can be trained on massive highly distributed datasets in far less
time than existing state-of-the-art stochastic optimization methods.
Performance of all these distributed methods scales linearly up to hundreds of
workers with low communication frequency.  We show empirically
that the proposed methods converge much faster than competing options.  
\end{itemize}

\begin{table}[h!]
\begin{center}
  \caption{Distributed Algorithms Proposed}
  \label{dist_algos}
    \begin{tabular}{| c | c | c | c |}
    \hline
    Proposed Algorithm & Asynchronous? & Storage (No. of gradients) & Gradients/Iteration \\
    \hline
    CentralVR-Sync & No & $n$ & 1  \\
    \hline
    CentralVR-Async & Yes & $n$ & 1  \\
    \hline
    Distributed SVRG & No & $2$ & 2.5  \\
    \hline
    Distributed SAGA & Yes & $n$ & 1  \\
    \hline
  \end{tabular}
  \end{center}
\end{table}

\section{CentralVR Algorithm: single-worker case}
\label{sec:sequential_algo}
We begin by proposing our new VR scheme, CentralVR, in the single-worker case.  As we will see later, the proposed method has a natural generalization to the distributed setting that has low communication requirements.

\subsection{Algorithm Overview}
Our proposed VR scheme is divided into epochs, with $n$ updates taking place in each epoch. Let the iterates generated in the $m$-th epoch be written as $\{x_m^j\}_{j=1}^n.$ Also let $\tilde{x}_m^l$ denote the iterate at which the $l$-th data index was most recently before the $m+1$-th epoch (i.e., on or before the $m$-th epoch). Then, the update rule for CentralVR is given by:
\begin{align} \label{update}
x\kp_{m+1} = x^k_{m+1} - \eta v^k_{m+1},
\end{align}
where $v^k_{m+1}$ is defined as  
\begin{align}
v^k_{m+1} &:= \nabla f_{i_k} (x^k_{m+1}) - \nabla f_{i_k} (\tilde{x}^{i_k}_m) + \overline{g}_m, \label{v_def} \\
\text{and} \quad \overline{g}_m &:= \frac{1}{n} \sum_{j=1}^n \nabla f_j (\tilde{x}^j_m) \nonumber.
\end{align}
Thus, $\overline{g}_m$ is the average of the gradients of all component functions $\{\nabla f_j\}_{j=1}^n,$ each evaluated at the most recent iterate $\{\tilde{x}^j_m\}_{j=1}^n$ at which the corresponding function was used on or before the $m$-th epoch. These gradients are stored in a table, and the average gradient $\overline{g}_m$ is updated at the end of each epoch, i.e., after every $n$ parameter updates.

Note that if $i_k$ is chosen uniformly at random from the set $\{1, 2, \cdots, n\}$ on each iteration $k$, then, conditioning on all history (all $x$), we have $$\mathbb{E} \big[  \nabla f_{i_k} (\tilde{x}^{i_k}_m) \big ] = \frac{1}{n} \sum_{j=1}^n \nabla f_j (\tilde{x}^j_m) = \overline{g}_m.$$ Thus, the error correction term has expected value 0, and $\mathbb{E} \big[ v^k_{m+1} \big] = \nabla f (x^k_{m+1})$, i.e., the approximate gradient $v^k_{m+1}$ is unbiased.

\subsection{Permutation Sampling}

In practical implementations, it is natural to consider a random permutation of the data indices on every epoch, rather than uniformly choosing a random index on every iteration. Thus, on each epoch, a random permutation of the data indices is chosen and a pass is made over the entire dataset, resulting in $n$ updates, one per data sample.  Permutation sampling often outperforms uniform random sampling empirically \cite{bottou2009curiously, bottou2012stochastic}, although theoretical justification for this is still limited (see \cite{gurbuzbalaban2015random, shamir2016without} for some recent results).

As an alternative to uniform random sampling, CentralVR can leverage random permutations over the data indices.  Let $\pi_m$ denote a random permutation of the data indices $\{1, 2, \cdots, n\}$ for the $m$-th epoch, with $\pi_m^j$ denoting the data index chosen in the $j$-th iteration in the $m$-th epoch. Thus, now $\tilde{x}_m^l$ denotes the iterate corresponding to the point when the $l$-th data index was chosen in the $m$-th epoch. The update rule with the random permutation is given by \eqref{update} and \eqref{v_def}, with $i_k = \pi_{m+1}^k$.

Summing \eqref{update} over all $k = 0, 1, \cdots, n - 1$, we get
\begin{align}
\sum_{k=0}^{n-1}  v^k_{m+1} =& \sum_{k=0}^{n-1} \Big( \nabla f_{\pi^k_{m+1}} (x^k_{m+1}) - \nabla f_{\pi^k_{m+1}} (\tilde{x}^{i_k}_m) + \overline{g}_m \Big) \nonumber \\
=&  \sum_{k=0}^{n-1} \Big( \nabla f_{k} (\tilde x^k_{m+1}) - \nabla f_{k} (\tilde{x}^{k}_m) + \overline{g}_m \Big) \nonumber \\
=& \sum_{k=0}^{n-1} \nabla f_{k} (\tilde x^k_{m+1}). \nonumber
\label{summed_v}
\end{align}

Thus, summing \eqref{update} over all $k = 0, 1, \cdots, n - 1$, using the telescoping sum in $x^k_{m+1}$, and using the convention that $x^0_{m+1} = x^n_{m}$, we get
\begin{equation}
x^0_{m+2} = x^0_{m+1} - \eta \sum_{j=1}^n \nabla f_j \big(\tilde{x}^j_{m+1}\big). \label{update_with_v}
\end{equation}

Equation \eqref{update_with_v} shows the update rule in terms of the iterates at the ends of the epochs. Thus, over an epoch, the average gradient accumulated by  CentralVR  is unbiased and thus is a good estimate of the true gradient. This average gradient term can be accumulated cheaply during an
epoch, without any noticeable overhead.


\subsection{Algorithm Details for  CentralVR}
The detailed steps of CentralVR are listed in Algorithm \ref{alg:sequential}. Note, the stored gradients and the average gradient term $\widetilde{g_y}$ are initialized using a single epoch of ``vanilla'' SGD with no VR correction.

\begin{algorithm}[h]
  \begin{algorithmic}[1]
    \STATE \textbf{parameters} learning rate $\eta$
    \STATE \textbf{initialize} $x$, $\{\nabla f_j (\tilde{x}^j)\}_j$, and $\overline{g}$ using plain SGD
    \WHILE {not converged}
    \STATE $\widetilde{g} \gets 0$
    \STATE set $\pi$: random permutation of indices $1, 2, \cdots, n$
    \FOR {$k$ in $\{1, \dots, n\}$}
      \STATE set: $x^{k+1} \gets x^k - \eta \big(\nabla f_{\pi_k} (x^k) - \nabla f_{\pi_k} (\tilde{x}^{\pi_k}) + \overline{g}\big)$
    \STATE accumulate average: $\widetilde{g} \gets \widetilde{g} + \nabla f_{\pi_k} (x^k)/n$
    \STATE store gradient: $\nabla f_{\pi_k} (\tilde{x}^{\pi_k}) \gets \nabla f_{\pi_k} (x^k)$
    \ENDFOR
  \STATE set average gradient for next epoch: $\overline{g} \gets \widetilde{g}$
    \ENDWHILE
  \end{algorithmic} 
  \caption{CentralVR Algorithm: single worker case}  
  \label{alg:sequential}
\end{algorithm}

CentralVR builds on the SAGA method.   SAGA relies on the update
rule \eqref{saga}, which requires an average over a large
number of iterates $(\widetilde {g_y} = \frac{1}{n}\sum_i \nabla f_i(\phi_i))$
to be continuously updated on every iteration.  In the distributed setting,
where the vector $\widetilde {g_y}$ must be shared across nodes, maintaining
an up-to-date average requires large amounts of communication. This makes SAGA less stable in distributed implementations when the communication frequency is decreased. Updating $\widetilde {g_y}$ only occasionally (as we do in the distributed variants of {\em CentralVR} below) translates into significant communication savings in the distributed setting.

CentralVR has the same time and space complexities as SAGA. Namely, on every iteration, 1 gradient computation is required, similar to SGD, and the $n$ gradients $\{ \nabla f_k (\tilde x^k_m) \}_{k=1}^n$ also need to be stored. Note that this is not a significant storage requirement, since for models like logistic regression and ridge regression only a single number is required to be stored corresponding to each gradient.

\section{Convergence Analysis}
\label{sec:sequential_theory}

We now present convergence bounds for Algorithm \ref{alg:sequential}.  We make
the following standard assumptions about the function when studying
convergence properties. First, each $f_i$ is strongly convex with strong convexity
constant $\mu$:
\eqn{eq:strong_convexity_basic}{
f_i(x) \geq f_i(y)  + \nabla f_i(y)^T (x - y)+ \frac{\mu}{2} \| x - y\|^2 .
}
Second, each $f_i$ has Lipschitz continuous gradients with Lipschitz constant $L$ so that
\aln{ 
&f_i(x) \leq f_i(y)+ \nabla f_i(y)^T (x - y)  + \frac{L}{2} \| x - y\|^2.  \label{eq:lipschitz_constant_basic} 
}

We now present our main result. The proof is presented in the appendix.


\begin{theorem}
\label{random_access_thm}
Consider CentralVR with data index $i_k$ drawn uniformly at random (with replacement) on each iteration $k$. Define $\alpha$ as
$$\alpha := \max \Big( 1 - \eta \mu, \frac{2 L^2 \eta}{\mu (1 - 2 L \eta)} \Big).$$
If the step size $\eta$ is small enough such that $0 < \alpha < 1$, then we have the following bound:
\begin{align*}
\big\|x^0_{m+2} - x\opt \big\|^2  + c \big( \overbar{f ( x_{m+1})}  - f(x\opt) \big) \le \alpha \Big( \big\|x^0_{m+1} - x\opt \big\|^2  + c \big( \overbar{f ( \tilde x_m)}  - f(x\opt) \big) \Big),
\end{align*}
where $c = 2 n \eta (1 - 2 L \eta )$ and we define $\overbar{f(x_{m})} := \frac{1}{n}\sum_{k = 0}^{n-1} f(x^k_{m})$.
In other words, the method converges linearly.
\end{theorem}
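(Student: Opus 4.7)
My strategy is to derive a per-iteration inequality on $\mathbb{E}\|x^{k+1}_{m+1}-x\opt\|^2$ and then sum it across an epoch to obtain the claimed contraction of the Lyapunov function $\|x^0-x\opt\|^2 + c(\overline{f}-f(x\opt))$. The three essential ingredients are: (i) the unbiasedness $\mathbb{E}[v^k_{m+1}]=\nabla f(x^k_{m+1})$, which holds under uniform with-replacement sampling; (ii) a bound on $\mathbb{E}\|v^k_{m+1}\|^2$ that isolates the contribution of the current iterate from that of the stored iterates $\{\tilde x^j_m\}$ (treated as frozen throughout epoch $m+1$, since $\overline{g}_m$ only refreshes at the epoch boundary); and (iii) strong convexity of $f$ to convert squared distances into function-value gaps so that both halves of the Lyapunov function appear symmetrically on both sides of the recursion.

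For a single step, I would expand $\|x^k_{m+1}-\eta v^k_{m+1}-x\opt\|^2$, take conditional expectation, and handle the cross term via strong convexity to produce the factor $(1-\eta\mu)\|x^k_{m+1}-x\opt\|^2$ together with $-2\eta(f(x^k_{m+1})-f(x\opt))$. The main technical step is the variance bound. I would write
\[
v^k_{m+1} = \bigl[\nabla f_{i_k}(x^k_{m+1})-\nabla f_{i_k}(x\opt)\bigr] + \bigl[\overline{g}_m - (\nabla f_{i_k}(\tilde x^{i_k}_m)-\nabla f_{i_k}(x\opt))\bigr],
\]
apply $\|a+b\|^2\le 2\|a\|^2+2\|b\|^2$, and treat the two pieces asymmetrically. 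For the first piece I would use the convex/Lipschitz co-coercivity $\|\nabla f_i(x)-\nabla f_i(x\opt)\|^2\le 2L[f_i(x)-f_i(x\opt)-\langle\nabla f_i(x\opt),x-x\opt\rangle]$; averaging over $i$ and invoking $\nabla f(x\opt)=0$ yields $4L(f(x^k_{m+1})-f(x\opt))$, and this is precisely what produces the sharp factor $(1-2L\eta)$ in the eventual constant $c$. For the second piece, which has conditional mean zero, I would apply $\mathbb{E}\|X-\mathbb{E} X\|^2\le\mathbb{E}\|X\|^2$ followed by the cruder $L$-Lipschitz bound $L^2\|\tilde x^{i_k}_m-x\opt\|^2$ (the sharper co-coercivity does not collapse cleanly here because the term $\langle\nabla f_i(x\opt),\tilde x^i_m-x\opt\rangle$ does not vanish in expectation when $\tilde x^i_m$ depends on $i$), and then use strong convexity of $f$ to turn $\tfrac{1}{n}\sum_i\|\tilde x^i_m-x\opt\|^2$ into $\tfrac{2}{\mu}\overline{F}_m$ with $\overline{F}_m:=\overline{f(\tilde x_m)}-f(x\opt)$. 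Assembling gives the per-step recursion
\[
\mathbb{E}\|x^{k+1}_{m+1}-x\opt\|^2 \le (1-\eta\mu)\|x^k_{m+1}-x\opt\|^2 - 2\eta(1-2L\eta)(f(x^k_{m+1})-f(x\opt)) + \tfrac{4L^2\eta^2}{\mu}\overline{F}_m.
\]

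To finish, I would take total expectations, sum over $k=0,\ldots,n-1$ with the convention $x^n_{m+1}=x^0_{m+2}$, and note that the intermediate $\|x^k_{m+1}-x\opt\|^2$ terms on both sides align up to a nonnegative slack $\eta\mu\sum_{k=1}^{n-1}\|x^k_{m+1}-x\opt\|^2$ that can simply be discarded. The negative function-gap terms telescope into $-c\,\mathbb{E}\overline{F}_{m+1}$ with $c=2n\eta(1-2L\eta)$, and the coefficient $\tfrac{4nL^2\eta^2}{\mu}$ of $\overline{F}_m$ rewrites as $c\cdot\tfrac{2L^2\eta}{\mu(1-2L\eta)}$. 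Since the resulting coefficients of $\|x^0_{m+1}-x\opt\|^2$ and $c\,\overline{F}_m$ on the right are $1-\eta\mu$ and $\tfrac{2L^2\eta}{\mu(1-2L\eta)}$ respectively, both are dominated by $\alpha$, delivering the claimed contraction. The hardest part will be engineering Step 2 so that the single constant $c$ multiplies both halves of the Lyapunov function identically; this is precisely what forces the asymmetric use of tight co-coercivity for the current iterate versus the looser squared-distance bound for the error correction, and it is what allows the two candidate contraction factors to be merged into the single $\alpha$.
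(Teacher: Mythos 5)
Your proposal is correct and follows essentially the same route as the paper's own proof: the identical splitting of $v^k_{m+1}$ around $\nabla f_{i_k}(x\opt)$, the asymmetric use of co-coercivity for the current iterate versus the Lipschitz-plus-strong-convexity bound for the mean-zero correction term (the paper's Lemmas 1 and 2), and the same epoch summation that telescopes the squared distances while discarding the nonnegative $\eta\mu$ slack beyond the $k=0$ term. Your remark on why tight co-coercivity fails for the stored iterates (the inner product with $\nabla f_i(x\opt)$ does not vanish when $\tilde x^i_m$ depends on $i$) accurately explains the paper's choice of Lemma 2 there.
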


\subsubsection*{Remark on Step Size Restrictions}
From Theorem \ref{random_access_thm}, notice that CentralVR converges linearly when the step size $\eta$ is small enough such that
\begin{align}
\eta < \min \Big( \frac{1}{\mu}, \frac{1}{2L}, \frac{\mu}{2L(L + \mu)}    \Big). \nonumber
\end{align}
If we observe that $L\ge \mu,$ then we see that this condition is satisfied whenever $\eta < \frac{\mu}{2L(L + \mu)}$.


\section{Distributed Algorithms}
\label{sec:dist_proposed}

We now consider the distributed setting, with a single central server and $p$
local clients, each of which contains a portion of the data set.  In
this setting, the data is decomposed into disjoint subsets $\{\Omega_s\}$, where $s$ denotes a particular local client,
and $\sum_s |\Omega_s| = n.$   We denote the $i$-th function stored on client
$s$ as $f_i^s.$   Our goal is to minimize the global objective function of the form
\begin{equation}
f(x) =  \frac{1}{n} \sum_{s=1}^p  \sum_{j = 1}^{| \Omega_s |} f_j^s(x). \nonumber
\end{equation}

We consider a centralized setting, where the clients can only communicate with the central server, 
and our goal is to derive stochastic algorithms that
scale linearly to high $p$, while remaining stable even under low
communication frequencies between local and central nodes.

\subsection{Synchronous Version}

CentralVR naturally extends to the distributed synchronous setting, and is presented in
Algorithm \ref{alg:sync}. To distinguish the algorithm from the single worker case, we call it CentralVR-Sync.
On each epoch, the local nodes first retrieve a copy of
the central iterate $x,$ and also $\widetilde{g_y}$, which represents the
averaged gradient over all data. The CentralVR method is then performed on each
node, and the most recent gradient for each data point $\nabla f^s_{i_k}
(\tilde{x}^{i_k})$ is stored.  By sharing $\widetilde{g_y}$ across nodes, we
ensure that the local gradient updates utilize global gradient information
from remote nodes.  This prevents the local node from drifting far away from
the global solution, even if each local node runs for one whole epoch before
communicating back with the central server.
  
\begin{algorithm}[h]
  \begin{algorithmic}[1]
    \STATE \textbf{parameters} learning rate $\eta$
    \STATE \textbf{initialize} $x$, $\{\nabla f_j (\tilde{x}^j)\}_j$, $\overline{g}$
    \WHILE {not converged}
    \FOR {each local node $s$}
    \STATE $\widetilde{g} \gets 0$
    \STATE set $\pi$: random permutation of indices $1, 2, \cdots, | \Omega_s |$
    \FOR {$k$ in $\{1, \dots, |\Omega_s|\}$}
      \STATE $x^{k+1} \gets x^k - \eta \big(\nabla f^s_{\pi_k} (x^k) - \nabla f^s_{\pi_k} (\tilde{x}^{\pi_k}) + \overline{g}\big)$
    \STATE accumulate average: $\widetilde{g} \gets \widetilde{g} + \nabla f^s_{\pi_k} (x^k)/|\Omega_s|$
    \STATE store gradient: $\nabla f^s_{\pi_k} (\tilde{x}^{\pi_k}) \gets \nabla f^s_{\pi_k} (x^k)$
    \ENDFOR
    \STATE set average gradient to send to server: $\overline{g} \gets \widetilde{g}$
    \STATE send $x$, $\overline{g}$ to central node
     \STATE receive updated $x$, $\overline{g}$ from central node
    \ENDFOR
    \STATE \textbf{central node:}
    \INDSTATE average $x$, $\overline{g}$ received from workers
    \INDSTATE broadcast averaged $x$, $\overline{g}$ to local workers
    \ENDWHILE
  \end{algorithmic} 
  \caption{CentralVR-Sync Algorithm}  
  \label{alg:sync}
\end{algorithm}
 
%
In CentralVR-Sync, each local node performs local updates for one epoch, or $|\Omega_s|$ iterations, before communicating with the server. 
This is a rather low communication frequency compared to a parameter server
model of computation in which updates are continuously streamed to the central
node. This makes a significant difference in runtimes when the number of local nodes is large, as shown in later sections.

\subsection{Asynchronous Version}

The synchronous algorithm can be extended very easily to the asynchronous case, CentralVR-Async,
as shown in Algorithm~\ref{alg:async}. In CentralVR-Async, the central server keeps a copy of the current iterate $x$ and average gradient $\overline g$. The key idea for CentralVR-Async is that, once a local node completes an epoch, it sends the
\emph{change} in the local averages, given by $\Delta{x}_s$ and $\Delta \overline{g}_s,$ over the last
epoch to the central server. This change is added to the global $x$ and
$\overline g$ to update the parameters stored on the central server. Thus, when the central server
receives parameters from a local node $s$, the updates it performs have the
form
\begin{align*}
x = x + \frac{1}{p} \Delta{x}_s \hspace{2mm} \text{ and }  \hspace{2mm} \overline{g} = \overline{g} + \frac{1}{p}  \Delta \overline{g}_s,
\end{align*}
where $\Delta{x}_s$ and $\Delta \overline{g}_s$ are given by
\begin{align*}
\Delta{x}_s &= \{x^n_{m+1} - x^n_{m}\}_s \hspace{4mm} \text{ and } \\
\Delta \overline{g}_s &= \bigg\{\frac{1}{|\Omega_s|}  \sum_{j\in \Omega_s} \nabla f^s_j (\tilde{x}^j_{m+1}) - \frac{1}{|\Omega_s|}  \sum_{j\in \Omega_s} \nabla f^s_j (\tilde{x}^j_{m})\bigg\}_s.
\end{align*}
 Sending the change in the local parameter values rather than the local parameters themselves ensures that, when updating the central parameter, the previous contribution to the average from that local worker is just replaced by the new contribution. Thus, a fast working local node does not bias the global average solution toward its local solution with an excessive number of updates. This makes the algorithm more robust to heterogenous computing environments where nodes work at disparate speeds.

\begin{algorithm}[h]
  \begin{algorithmic}[1]
    \STATE \textbf{parameters} learning rate $\eta$
    \STATE \textbf{initialize} $x$, $\{\nabla f_j (\tilde{x}^j)\}_j, \overline{g}, \alpha = 1/p, x_\text{old} = \overline{g}_\text{old} = 0$
    \WHILE {not converged}
    \FOR {each local node}
    \STATE $\widetilde{g} \gets 0$
    \STATE set $\pi$: random permutation of indices $1, 2, \cdots, | \Omega_s |$
    \FOR {$k$ in $\{1, \dots, |\Omega_s|\}$}
      \STATE $x^{k+1} \gets x^k - \eta \big(\nabla f^s_{\pi_k} (x^k) - \nabla f^s_{\pi_k} (\tilde{x}^{\pi_k}) + \overline{g}\big)$
    \STATE accumulate average: $\widetilde{g} \gets \widetilde{g} + \nabla f^s_{\pi_k} (x^k)/|\Omega_s|$
    \STATE store gradient: $\nabla f^s_{\pi_k} (\tilde{x}^{\pi_k}) \gets \nabla f^s_{\pi_k} (x^k)$
    \ENDFOR
    \STATE set average gradient: $\overline{g} \gets \widetilde{g}$
    \STATE compute change: $\Delta{x} \gets x - x_\text{old}$, $\Delta{\overline{g}} \gets \overline{g} - \overline{g}_\text{old}$
    \STATE set: $x_\text{old} \gets x$, $\overline{g}_\text{old} \gets \overline{g}$
    \STATE send $\Delta{x}$, $\Delta{\overline{g}}$ to central node
     \STATE receive updated $x$, $\overline{g}$ from central node
    \ENDFOR
    \STATE \textbf{central node:}
    \INDSTATE receive $\Delta{x}$, $\Delta{\overline{g}}$ from a local worker
    \INDSTATE update: $x \gets x + \alpha \Delta{x}$, $\overline{g} \gets \overline{g} + \alpha \Delta{\overline{g}}$
    \INDSTATE send new $x$, $\overline{g}$ back to local worker
    \ENDWHILE
  \end{algorithmic} 
  \caption{CentralVR-Async Algorithm}  
  \label{alg:async}
\end{algorithm}

The proposed CentralVR scheme has several advantages.  It does not require a full gradient computation as in SVRG, and thus can be made fully asynchronous. Moreover, since the average gradient $\widetilde{g_y}$ in the error correction term is updated only at the end of an epoch, communication periods can be increased between the central server and the local nodes, while still maintaining fast and stable convergence.

%
%

\section{Distributed Variants of Other VR Methods}
\label{sec:dist_vr}
In this section, we propose distributed variants of popular variance reduction methods: SVRG and SAGA.  The properties of these variants are overviewed in Table \ref{dist_algos}.  

\subsection{Distributed SVRG}
\label{sec:dist_svrg}

In this section, we present a distributed version of SVRG appropriate for
distributed scenarios with high
communication delays. Recently, in \cite{reddi2015variance}, the authors
presented an asynchronous distributed version of SVRG using a parameter server
model of computation. In SVRG, the average gradient term is $\widetilde {g_y}
= \nabla f(y)$ as shown in \eqref{svrg}. This correction term is very accurate because it uses the entire dataset.  This would indicate that the algorithm would be robust to high
communication periods between the local nodes and the server.

However, a truly asynchronous method is not possible with SVRG since a synchronization step is unavoidable when computing the full gradient. Thus, in this section, we present a synchronous variant of SVRG in Algorithm \ref{alg:sync_svrg}. We define an additional parameter $\tau$ to denote the communication period, i.e., the number of updates to run on each local node before communicating with the central server. 

The true gradient $\overline{g}$ is maintained across all nodes throughout the whole communication period $\tau$, thus ensuring that the local workers stay close to the desired solution, even when $\tau$ is large. After $\tau$ updates, the current iterate ${x}_s$ on each local node $s$ is averaged on the central server to get ${x}$. The true gradient is evaluated at ${x}$, i.e., $\overline{g} = \nabla f({x})$, and $\overline{x} = {x}$ is used on each local node during the next epoch.

\begin{algorithm}[h]
  \begin{algorithmic}[1]
    \STATE \textbf{parameters} step size $\eta$, communication period $\tau$
    \STATE \textbf{initialize} $x$
    \WHILE {not converged}
    \STATE set: $\overline{x} \gets x$
    \STATE  set: $\overline{g} \gets \nabla f(\overline{x})$ via synchronization step
    \FOR {each local node $s$}
    \FOR {$k$ in $\{1, \dots, \tau\}$}
    \STATE sample $i_k \in \{1, \dots, |\Omega_s|\}$ with replacement
      \STATE $x^{k+1} \gets x^k - \eta \big(\nabla f^s_{i_k} (x^k) - \nabla f^s_{i_k} (\overline{x}) + \overline{g}\big)$
    \ENDFOR
    \STATE send $x$ to central node
     \STATE receive updated $x$ from central node
    \ENDFOR
    \STATE \textbf{central node:}
    \INDSTATE average $x$ received from workers
    \INDSTATE broadcast averaged $x$ to local workers
    \ENDWHILE
  \end{algorithmic} 
  \caption{Synchronous SVRG}  
  \label{alg:sync_svrg}
\end{algorithm}

\subsection{Distributed SAGA}
\label{sec:dist_saga}
The update rule for SAGA is given in \eqref{saga}. Since there is no synchronization step required as in SVRG, there is a very natural asynchronous version of the algorithm under the parameter server model of computation.  A linear convergence proof has been presented for the parameter server model of SAGA (see Theorem 3 in \cite{reddi2015variance}). However, this work does not contain any empirical studies of the method. The parameter server framework is a very natural generalization of SAGA, however it has very high bandwidth requirements for large numbers of nodes.

Algorithm \ref{alg:async_saga} presents an asynchronous version of SAGA with lower communication frequency. Like SVRG, we define a communication period parameter $\tau$ which determines the number of iterations to run on each machine before central communication.

\begin{algorithm}[h]
  \begin{algorithmic}[1]
    \STATE \textbf{parameters} step size $\eta$, communication period $\tau$
    \STATE \textbf{initialize} $x$, $\{\nabla f_j (\tilde{x}^j)\}_j, \alpha = 1/p, x_\text{old} = \overline{g}_\text{old} = 0$
    \STATE set average gradient: $\overline{g} \gets \frac{1}{n} \sum_j \nabla f_j (\tilde{x}^j)$
    \WHILE {not converged}
    \FOR {each local node}
    \FOR {$k$ in $\{1, \dots, \tau\}$}
    \STATE sample $i_k \in \{1, \dots, n\}$ with replacement
      \STATE $x^{k+1} \gets x^k - \eta \big(\nabla f^s_{i_k} (x^k) - \nabla f^s_{i_k} (\tilde{x}^{i_k}) + \overline{g} \big)$
      \STATE update: $\overline{g} \gets \overline{g} + \frac{1}{n} \big( \nabla f^s_{i_k} (x^k) -  \nabla f^s_{i_k} (\tilde{x}^{i_k}) \big)$
     \STATE store gradient: $\nabla f^s_{i_k} (\tilde{x}^{i_k}) \gets \nabla f^s_{i_k} (x^k)$
    \ENDFOR
    \STATE compute change: $\Delta{x} \gets x - x_\text{old}$, $\Delta{\overline{g}} \gets \overline{g} - \overline{g}_\text{old}$
    \STATE set: $x_\text{old} \gets x$, $\overline{g}_\text{old} \gets \overline{g}$
    \STATE send $\Delta{x}$, $\Delta{\overline{g}}$ to central node
     \STATE receive updated $x$, $\overline{g}$ from central node
    \ENDFOR
    \STATE \textbf{central node:}
    \INDSTATE receive $\Delta{x}$, $\Delta{\overline{g}}$ from a local worker
    \INDSTATE update: $x \gets x + \alpha \Delta{x}$, $\overline{g} \gets \overline{g} + \alpha \Delta{\overline{g}}$
    \INDSTATE send new $x$, $\overline{g}$ back to local worker
    \ENDWHILE
  \end{algorithmic} 
  \caption{Asynchronous SAGA}  
  \label{alg:async_saga}
\end{algorithm}
In the SAGA algorithm, the average gradient term $\overline{g}$ is updated on each iteration. Thus, as local iterations progress, the average gradient evolves differently on each local node. This makes the algorithm less robust to higher communication periods $\tau$. As the communication period increases, the local nodes drift farther apart from each other and the global solution. Thus, the learning rate needs to shrink as $\tau$ increases over a certain limit. This in turn slows down convergence. For this reason, distributed SAGA is less tolerant to long communication periods than the Algorithms  in Sections \ref{sec:dist_proposed} and \ref{sec:dist_svrg}. However, it still has fast convergence for much higher communication periods than existing stochastic schemes.

The asynchronous SAGA method (Algorithm \ref{alg:async_saga}) is built on the same idea as the proposed asynchronous algorithm: running averages are kept on each local node, and at the end of an epoch the {\em change} in the parameter values are sent to the central server. This makes the algorithm more robust when local nodes work at heterogenous speeds.

In our distributed SAGA algorithm, care has to be taken while updating the average gradient $\overline{g}$. Note that $\overline{g}$  is averaged over the whole dataset. Thus, when replacing the gradient value at the current index $i_k$, the update is scaled down by a factor of $n$ (the total number of global samples, as opposed to $|\Omega_s|,$ the number of local samples). At the end of a local epoch, the average of the stored gradients on each local node is sent back to the central server, along with the current estimate $x$. This ensures that the average gradient term on the central server $\hat{g}$ is built from the most recent gradient computations at each index.

\section{Empirical Results}
\label{sec:results}
In this section, we present the empirical performance of the proposed methods, both in sequential and distributed settings.
We benchmark the methods for two test problems: first, a binary classification problem with $\ell_2$-regularized logistic regression where each $f_i$ is of the form 
$$f_i(x)=\log \big(1+ \exp(b_ia_i^Tx) \big) + \lambda \|x\|^2,$$ 
where feature vector $a_i \in \mathbb{R}^d$ has label $b_i \in \mathbb{R}$.
 We also consider a ridge regression problem of the form $$f_i(x)=(a_i^Tx-b_i)^2 + \lambda \|x\|^2.$$ We present all our results with the $\ell_2$ regularization parameter set at $\lambda = 10^{-4}$, though we found that our results were not sensitive to this choice of parameter.


\begin{figure*}[t]
  \centering
  \begin{subfigure}[t]{0.4\textwidth}
    \includegraphics[width=\textwidth]{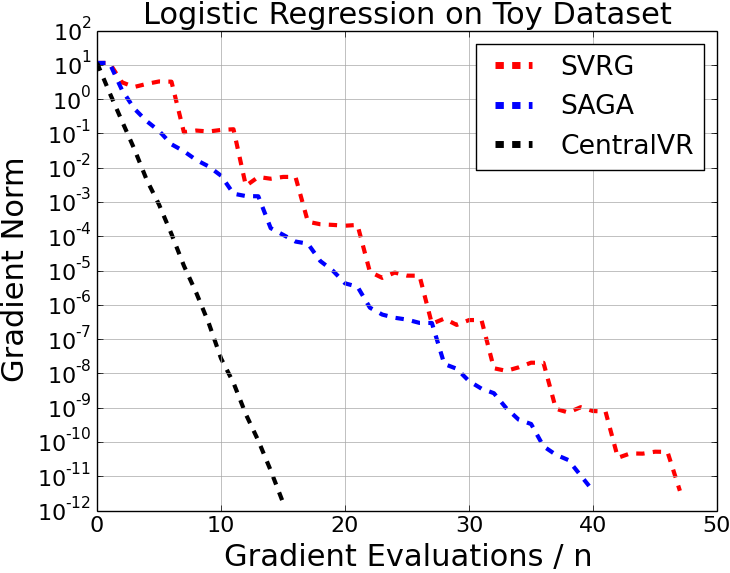}
  \end{subfigure}
  \begin{subfigure}[t]{0.4\textwidth}
    \includegraphics[width=\textwidth]{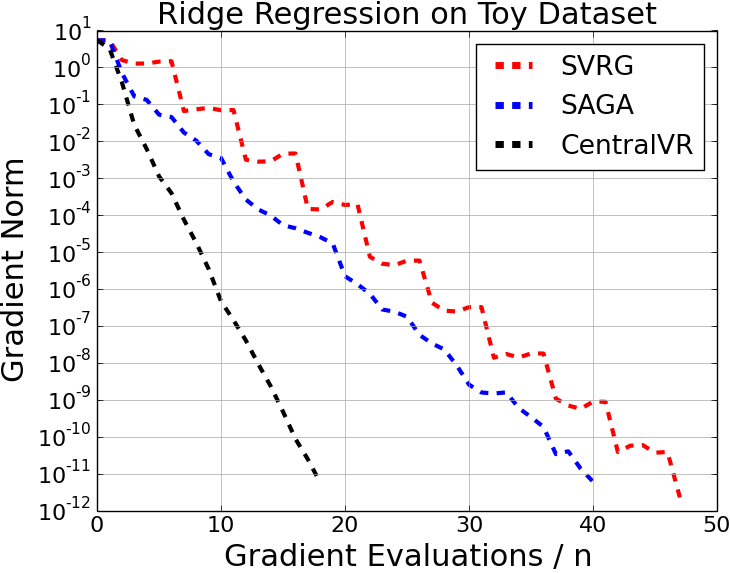}
  \end{subfigure}
  \begin{subfigure}[t]{0.4\textwidth}
    \includegraphics[width=\textwidth]{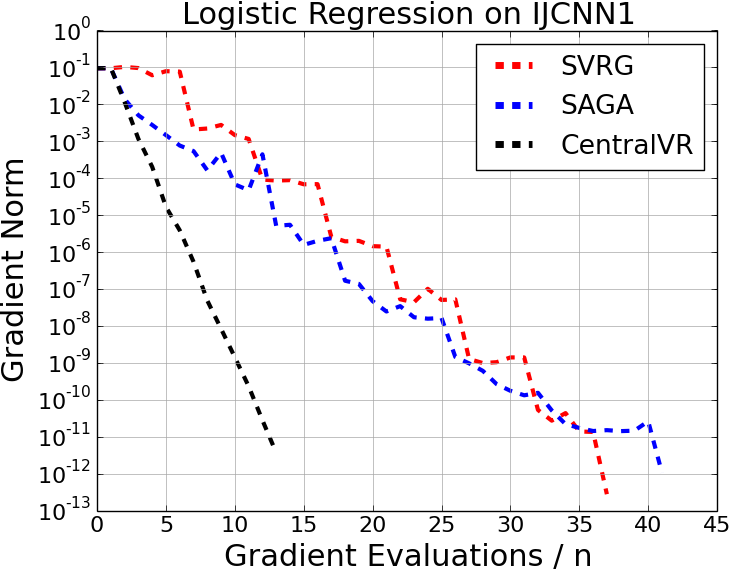}
  \end{subfigure}
  \begin{subfigure}[t]{0.4\textwidth}
    \includegraphics[width=\textwidth]{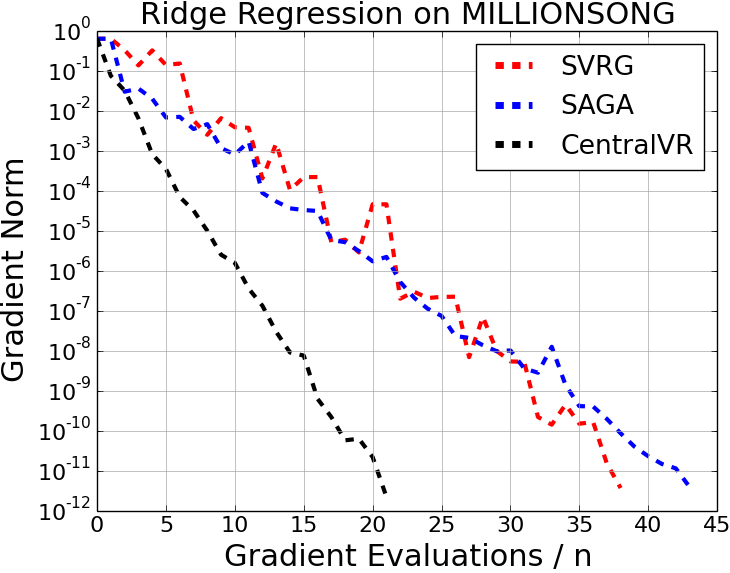}
  \end{subfigure}
  \caption{ Single Worker Results. Logistic regression on toy dataset; Ridge regression on toy data; Logistic regression on IJCNN1 dataset; Ridge regression on MILLIONSONG dataset; In each case CentralVR converges much faster than SVRG and SAGA.}
  \label{fig:seq}
\end{figure*}
 
\begin{figure*}[t]
  \centering
  \begin{subfigure}[t]{0.4\textwidth}
    \includegraphics[width=\textwidth]{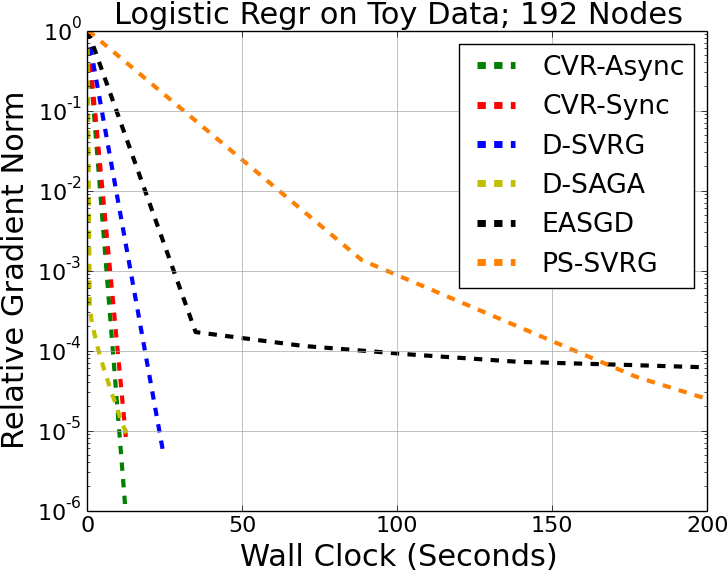}
  \end{subfigure}
  \begin{subfigure}[t]{0.4\textwidth}
    \includegraphics[width=\textwidth]{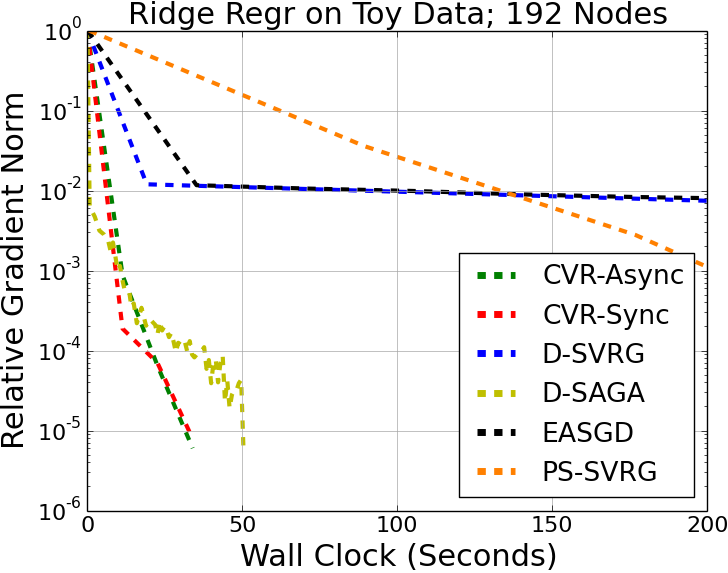}
  \end{subfigure}
  \begin{subfigure}[t]{0.4\textwidth}
    \includegraphics[width=\textwidth]{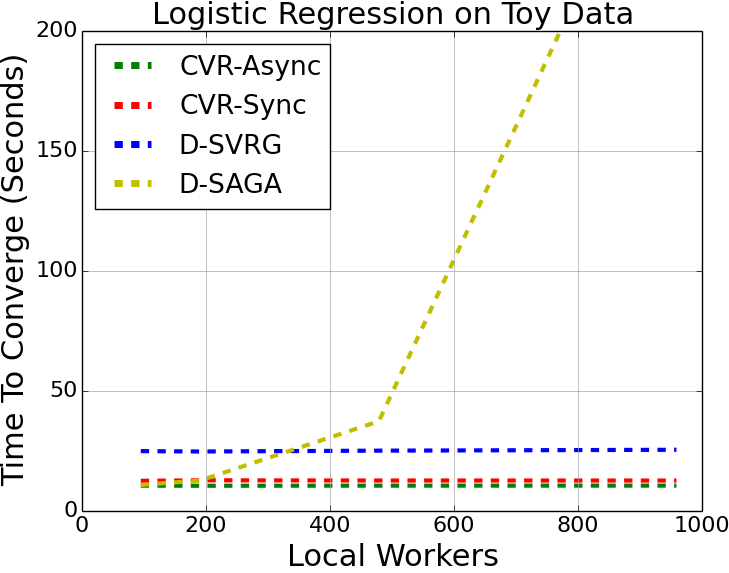}
  \end{subfigure}
  \begin{subfigure}[t]{0.4\textwidth}
    \includegraphics[width=\textwidth]{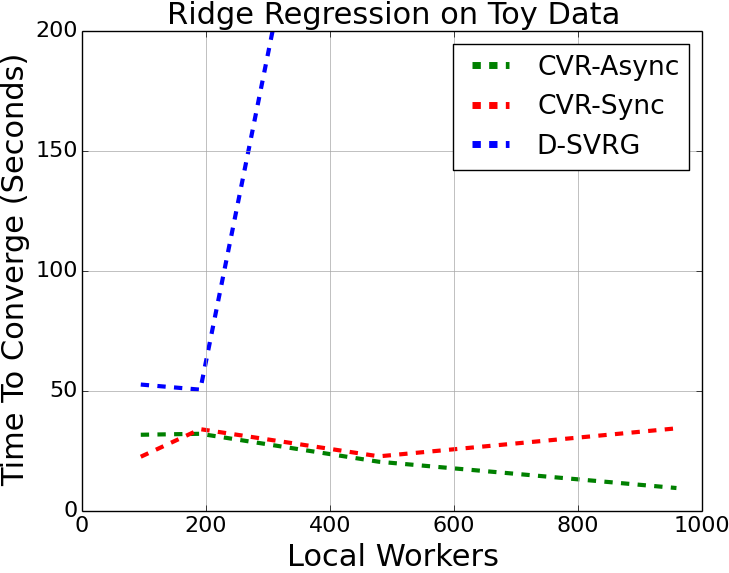}
  \end{subfigure}
  \caption{ Distributed Results on toy datasets for CentralVR-Sync and CentralVR-Async, compared to Distributed SVRG (Section \ref{sec:dist_svrg}), Distributed SAGA (Section \ref{sec:dist_saga}), Parameter Server SVRG and EASGD. Left two plots: Convergence curve for Logistic and ridge regression on synthetic data over 192 nodes. Right two plots: Time required for convergence as number of local workers is increased (data on each local worker is \emph{constant} -- i.e., total data scales \emph{linearly} with the number of local workers) for logistic and ridge regression.}
  \label{fig:dist}
\end{figure*}

\begin{figure*}[t]
  \centering
  \begin{subfigure}[t]{0.4\textwidth}
    \includegraphics[width=\textwidth]{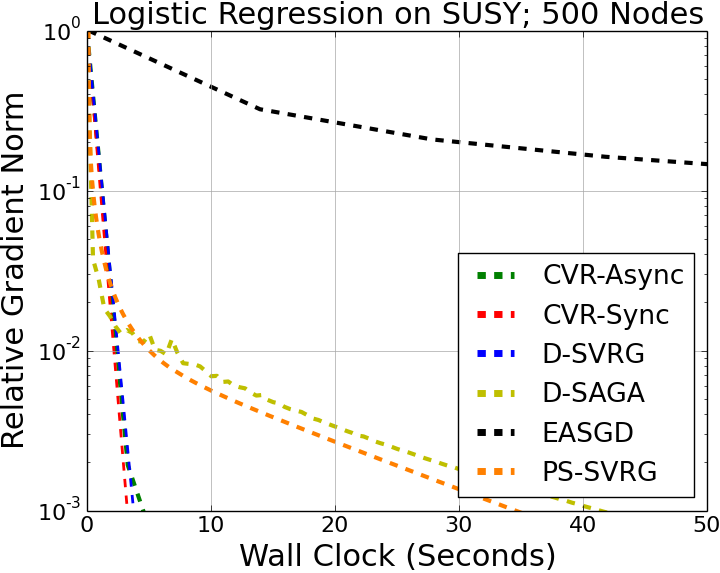}
  \end{subfigure}
  \begin{subfigure}[t]{0.4\textwidth}
    \includegraphics[width=\textwidth]{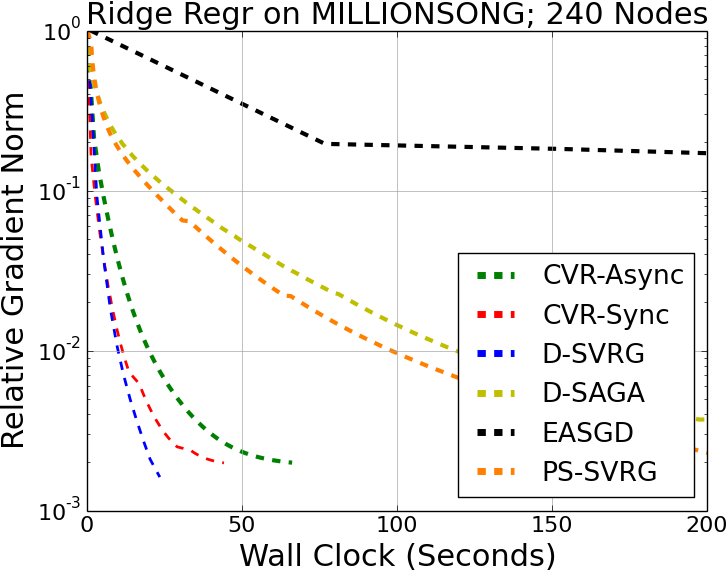}
  \end{subfigure}
  \begin{subfigure}[t]{0.4\textwidth}
    \includegraphics[width=\textwidth]{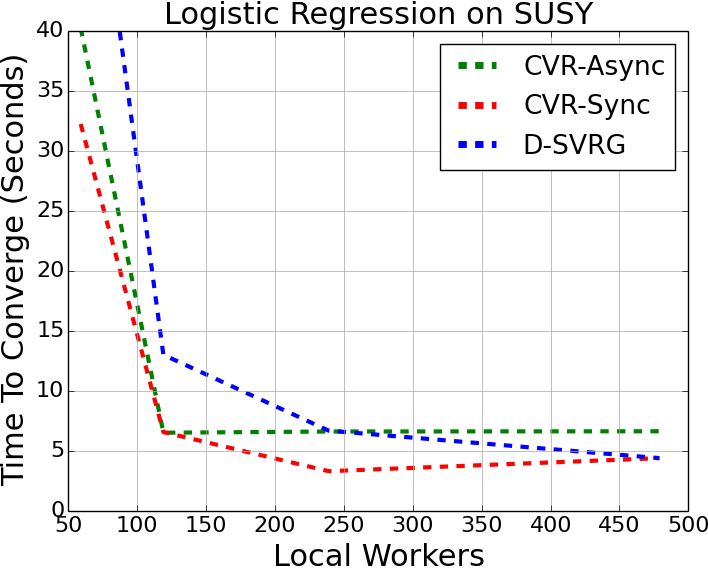}
  \end{subfigure}
  \begin{subfigure}[t]{0.4\textwidth}
    \includegraphics[width=\textwidth]{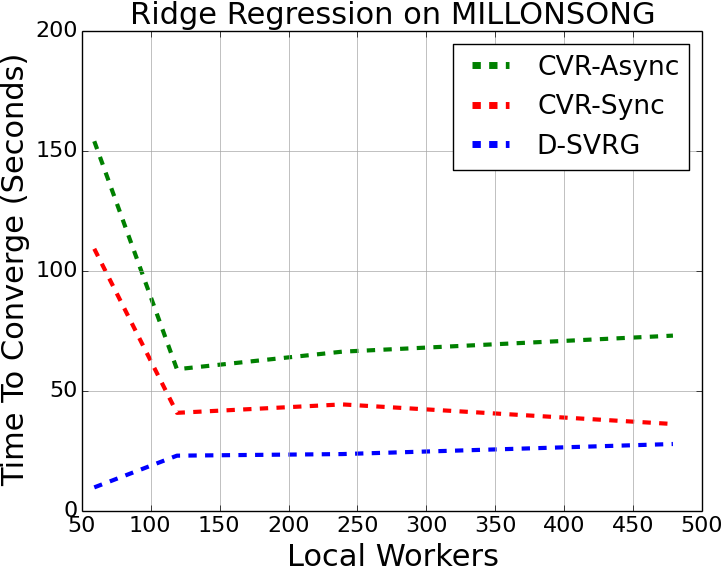}
  \end{subfigure}
  \caption{ Distributed Results on SUSY and MILLIONSONG for CentralVR-Sync and CentralVR-Async, compared to Distributed SVRG (Section \ref{sec:dist_svrg}), Distributed SAGA (Section \ref{sec:dist_saga}), Parameter Server SVRG (Param Server SVRG) and EASGD. (Left two plots) Convergence curve for Logistic regression and ridge regression on SUSY over 500 nodes and on MILLIONSONG over 240 nodes. (Right two plots) Time required for convergence as number of local workers is increased. }
  \label{fig:dist_real}
\end{figure*}

\subsection{Single Worker Results}
\label{sec:seq_results}
We first test our algorithms in the sequential, non-distributed setting. It is well known that VR beats vanilla SGD by a wide margin in many applications.  However, the different VR methods vary widely in their empirical behavior. We compare the single worker CentralVR algorithm to the two most popular VR methods, SVRG \cite{johnson2013accelerating} and SAGA \cite{defazio2014saga}.

We test the methods on two synthetic ``toy'' datasets, in addition to two real-world datasets. Synthetic classification data was generated by sampling two normal distributions with unit variance and means separated by one unit. For the least-squares prediction problem, we generate a random normal matrix $A$ and random labels of the form $b = Ax + \epsilon$, where $\epsilon$ is standard Gaussian noise. For each case, we kept the size of the dataset at $n=5000$ with $d=20$ features. For the binary classification problem, we kept equal numbers of data samples for each class. We also tested performance of our algorithms on two standard real world datasets:  IJCNN1 \cite{prokhorov2001ijcnn} for binary classification and the MILLIONSONG \cite{Bertin-Mahieux2011} dataset for least squares prediction. IJCNN1 contains 35,000 training data samples of 22 dimensions, while MILLIONSONG contains 463,715 training samples of 90 dimensions. For all our experiments, we maintain a constant learning rate, and choose the learning rate that yields fastest convergence.

Results appear in Figure \ref{fig:seq}. We compare convergence rates of the algorithms in terms of number of gradient computations for each method.  This provides a level playing field since different VR methods require different numbers of gradient computations per iteration, and gradient computations dominate the computing time.  The proposed CentralVR algorithm widely out-performs SAGA and SVRG in all cases, requiring less than one-third of the gradient computations of the other methods.

\subsection{Distributed Results}

We now present results of our algorithms in highly distributed settings. We implement the algorithms using a Python binding to MPI, and all experiments were run on an Intel Xeon E5 cluster with 24 cores per node. All our asynchronous implementations are ``locked", where at a given time only one local node can update the parameters on the central server. However, all proposed asynchronous algorithms can be easily implemented in a lock-free setting, leading to further speedups.

We compare the distributed versions of CentralVR, CentralVR-Async [CVR-Async in Figures \ref{fig:dist} and \ref{fig:dist_real}] and CentralVR-Sync [CVR-Sync in Figures \ref{fig:dist} and \ref{fig:dist_real}], proposed in Section \ref{sec:dist_proposed} with the following algorithms:
\begin{itemize}
\item Distributed SVRG (Section \ref{sec:dist_svrg}) [D-SVRG in Figures \ref{fig:dist} and \ref{fig:dist_real}]. We set the communication period $\tau = 2n$ as recommended in \cite{johnson2013accelerating}. We found the performance of the algorithm to be very robust to $\tau$.
\item Distributed SAGA (Section \ref{sec:dist_saga}) [D-SAGA in Figures \ref{fig:dist} and \ref{fig:dist_real}]. We vary the communication period $\tau = \{10, 100, 1000, 10000\}$ and present results for the $\tau$ yielding best results. The algorithm remains relatively stable for $\tau = \{10, 100, 1000\}$ but convergence speeds start slowing down significantly at $\tau = 10000$.
\item Elastic Averaging SGD (EASGD): This is a recently proposed asynchronous SGD method \cite{zhang2015deep} that has been shown to efficiently accelerate training times of deep neural networks. 
As in \cite{zhang2015deep}, we tested the algorithm for communication periods $\tau = \{4, 16, 64\}$, and found results to be nearly insensitive to $\tau$ ($\tau$ updates occur before communication). We also found the regular EASGD algorithm to outperform the momentum version (M-EASGD). We test performance both for a constant step size as well as a decaying step size (using a local clock on each machine) as given by $\eta_0/(1+\gamma k)^{0.5}$  (as in \cite{zhang2015deep}),  where $\eta_0$ is the initial step size, $k$ is the local iteration number, and $\gamma$ is the decay parameter. EASGD has been shown to outperform the related popular asynchronous SGD method Downpour \cite{dean2012large}, on both convex and non-convex settings.
\item Asynchronous ``Parameter Server" SVRG [PS-SVRG in Figures \ref{fig:dist} and \ref{fig:dist_real}]: an asynchronous version of SVRG on a parameter server model of computation \cite{reddi2015variance}. This method outperforms a popular asynchronous SGD method, Hogwild! \cite{recht2011hogwild}, which also uses a parameter server model. We set the epoch size to $2n$, as recommended in \cite{reddi2015variance}. 
\end{itemize}

For the variance reduction methods, we performed experiments using a constant step size, as well as the simple learning rate decay rule $\eta_l = \eta_0\gamma^l$ (here, $l$ is the number of \emph{epochs}, instead of iterations). Decaying the step size does not yield consistent performance gains, and constant step sizes work very well in practice.

We compared the algorithms on a binary classification problem and a least-squares prediction problem using both toy datasets and real world datasets. The toy datasets were created on each local worker exactly the same way as for the sequential experiments. The toy datasets had $d=1000$ features and $|\Omega_s|= 5000$ samples for \emph{each} core $s$, i.e., the total size of the dataset was $p \times 5000$, where $p$ denotes the number of local nodes. We also used the real world datasets MILLIONSONG~\cite{Bertin-Mahieux2011} (containing close to 500,000 data samples) for ridge regression and SUSY \cite{baldi2014searching} (5,000,000 data samples) for logistic regression.

Figure \ref{fig:dist} shows results of our distributed experiments on toy datasets. The left two plots compare the rates of convergence of our algorithms scaled over 192 cores for logistic regression and ridge regression. The $x$-axis displays wall clock time in seconds and the $y$-axis displays the relative norm of the gradient, i.e., the ratio between the current gradient norm and the initial gradient norm. In almost all cases the proposed algorithms, in particular CentralVR, have substantially superior rates of convergence over established schemes. The right two plots in Figure \ref{fig:dist} demonstrate the scalability of our algorithms. On the $y$-axis, we plot the wall clock time (in seconds) required for convergence, and on the $x$-axis, we vary the number of nodes as 96, 192, 480 and 960. Each local worker has $|\Omega_s|=5000$ data points in each case, i.e., the amount of data scales linearly with the number of nodes. Notice that CentralVR-Sync and CentralVR-Async exhibit nearly perfect linear scaling, even when the number of workers is almost 1000. The dataset size in this regime is close to 5 million data points, and the proposed CentralVR methods train both our logistic and ridge regression models to five digits of precision in less than 15 seconds. 

Figure \ref{fig:dist_real} shows results of our distributed experiments on the large datasets SUSY and MILLIONSONG. The left two plots show convergence results for our algorithms over 500 nodes for SUSY and 240 nodes for MILLIONSONG. In both cases, we see that our proposed algorithms outperform or remain competitive with previously proposed schemes. The right two plots show the scaling of our algorithms as we increase the number of local workers for training SUSY and MILLIONSONG. We see that for MILLIONSONG, increasing the number of local workers initially decreases convergence time, but speed levels out for large numbers of workers, likely due to the smaller size of the local dataset fragments. On the larger SUSY problem, we find a consistent decrease in the convergence times as we increase the number of workers. We train on this 5,000,000 sample dataset in less than 5 seconds using 750 local workers.

\section{Conclusion}
This manuscript introduces a new variance reduction scheme, CentralVR, that has lower communication requirements than conventional schemes, allowing it to perform better in highly parallel cloud or cluster computing platforms. In addition, distributed versions of well-known variance reduction stochastic gradient descent (SGD) methods are presented that also perform well in highly distributed settings. We show that by leveraging variance reduction, we can combat the diminishing returns that plague classical SGD methods when scaled across many workers, achieving linear performance scaling to over 1000 cores. This represents a significant increase in scalability over previous stochastic gradient methods.

\section{Acknowledgements}
This work was supported by the National Science Foundation (\#1535902), and the
Office of Naval Research (\#N00014-15-1-2676).

\section{Proofs of Technical Results} \label{sec:proofs}

\subsection{Lemmas}

We first start with two lemmas that will be useful in the proof for Theorem \ref{random_access_thm}.

\begin{lemma} \label{popular_lemma}
For any $f$ defined as $f := \frac{1}{n} \sum_{i=1}^n f_i$, where each $f_i$ satisfies \eqref{eq:strong_convexity_basic} and \eqref{eq:lipschitz_constant_basic}, and on conditioning on any $x$, we have
$$ \mathbb{E} \big\| \nabla f_j(x) -  \nabla f_j(x\opt) \big\|^2  \le 2L (f(x)  - f(x\opt)),  $$
where $j$ is sampled uniformly at random from $\{1, 2, \dots, n\}$ and $x\opt$ denotes the minimizer of $f.$
\end{lemma}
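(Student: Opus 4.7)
The plan is to follow the standard variance-reduction-style argument that bounds the second moment of a component gradient difference by the suboptimality gap. The key auxiliary construction is, for each $i$, the shifted function
$g_i(y) := f_i(y) - f_i(x\opt) - \nabla f_i(x\opt)^T (y - x\opt).$
Since $f_i$ is convex with $L$-Lipschitz gradient, so is $g_i$, and by construction $\nabla g_i(x\opt) = 0$, so $x\opt$ is a global minimizer of $g_i$.

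Next I would invoke the standard ``co-coercivity at the minimizer'' inequality for an $L$-smooth convex function $g_i$ with minimum at $x\opt$, namely
$g_i(x) - g_i(x\opt) \;\ge\; \frac{1}{2L}\,\|\nabla g_i(x)\|^2,$
which follows from applying the descent lemma to the point $x - \tfrac{1}{L}\nabla g_i(x)$. Rewriting this in terms of $f_i$ gives
$f_i(x) - f_i(x\opt) - \nabla f_i(x\opt)^T(x - x\opt) \;\ge\; \frac{1}{2L}\,\|\nabla f_i(x) - \nabla f_i(x\opt)\|^2.$

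Finally, I would average this inequality over $i = 1, \dots, n$. The left-hand side collapses to
$f(x) - f(x\opt) - \nabla f(x\opt)^T(x - x\opt) \;=\; f(x) - f(x\opt),$
using $\nabla f(x\opt) = 0$ since $x\opt$ minimizes $f$. The right-hand side is exactly $\tfrac{1}{2L}\,\mathbb{E}\|\nabla f_j(x) - \nabla f_j(x\opt)\|^2$ when $j$ is drawn uniformly from $\{1,\dots,n\}$. Rearranging yields the claimed bound.

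There is no real obstacle here; the only step that requires a bit of care is the co-coercivity bound, which I would either cite as a standard consequence of $L$-smoothness and convexity or derive in one line by plugging $y = x - \tfrac{1}{L}\nabla g_i(x)$ into the descent lemma $g_i(y) \le g_i(x) + \nabla g_i(x)^T(y-x) + \tfrac{L}{2}\|y-x\|^2$ and using $g_i(y) \ge g_i(x\opt)$. Note that strong convexity is not needed for this lemma; only convexity and $L$-smoothness of the individual $f_i$ are used.
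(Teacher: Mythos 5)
Your proposal is correct and follows essentially the same route as the paper: the central step in both is the standard smoothness inequality $\|\nabla f_i(x) - \nabla f_i(x\opt)\|^2 \le 2L\big(f_i(x) - f_i(x\opt) - \langle \nabla f_i(x\opt), x - x\opt\rangle\big)$, followed by averaging over $i$ and using $\nabla f(x\opt) = 0$. The only difference is cosmetic: the paper cites this inequality from Nesterov (Theorem 2.1.5), while you derive it via the shifted function $g_i$ and the descent lemma; your closing observation that strong convexity is not needed here is also accurate.
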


\begin{proof}
A standard result used frequently in the convex optimization literature is as follows:
$$ \| \nabla f_j(x) -  \nabla f_j(x\opt) \|^2 \le 2L ( f_j(x) - f_j(x\opt) - \langle \nabla f_j(x\opt), x - x\opt \rangle ),$$
where $f_j$ is $L$-Lipschitz smooth. A proof for this inequality can be found in \cite{nesterov2013introductory} (Theorem 2.1.5 on page 56).

Since $j$ is sampled uniformly at random from $\{1, 2, \dots, n\}$, we can write
\begin{align*}
\mathbb{E} ( f_j(x) - f_j(x\opt) - \langle \nabla f_j(x\opt), x - x\opt \rangle )  = f(x) - f(x\opt) - \langle \nabla f(x\opt), x - x\opt \rangle = f(x) - f(x\opt),
\end{align*}
where we use the property that $\nabla f(x\opt) = 0$.

Thus, we get the desired result:
$$ \mathbb{E} \big\| \nabla f_j(x) -  \nabla f_j(x\opt) \big\|^2  \le 2L (f(x)  - f(x\opt)). $$
\end{proof}

\begin{lemma} \label{lunch_lemma}
For any $f$ defined as $f := \frac{1}{n} \sum_{i=1}^n f_i$, where each $f_i$ satisfies \eqref{eq:strong_convexity_basic} and \eqref{eq:lipschitz_constant_basic}, and for any $x$ and $i$ we have
$$\big\| \nabla f_i(x) -  \nabla f_i(x\opt) \big\|^2 \le \frac{2L^2}{\mu} \big( f(x) - f(x\opt) \big),$$
where $x\opt$ denotes the minimizer of $f$.
\end{lemma}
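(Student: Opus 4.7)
The plan is to combine $L$-smoothness of the individual $f_i$ with $\mu$-strong convexity of the aggregate $f$, in that order.

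First I would apply the Lipschitz gradient assumption \eqref{eq:lipschitz_constant_basic} to $f_i$ to obtain the standard consequence
\[
\bigl\|\nabla f_i(x) - \nabla f_i(x\opt)\bigr\|^2 \le L^2 \|x - x\opt\|^2.
\]
This is immediate from $L$-Lipschitz continuity of $\nabla f_i$, which follows from \eqref{eq:lipschitz_constant_basic} in the usual way (one does not even need the sharper co-coercivity bound used in Lemma \ref{popular_lemma}, since we only have a single index $i$ and no expectation to average against).

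Next I would use the fact that $f = \frac{1}{n}\sum_i f_i$ inherits $\mu$-strong convexity from the individual $f_i$. Since $\nabla f(x\opt) = 0$, the strong convexity inequality \eqref{eq:strong_convexity_basic} evaluated at $x\opt$ and $x$ gives
\[
f(x) \ge f(x\opt) + \tfrac{\mu}{2}\|x - x\opt\|^2,
\]
equivalently $\|x - x\opt\|^2 \le \tfrac{2}{\mu}\bigl(f(x) - f(x\opt)\bigr)$.

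Finally, chaining the two bounds together yields
\[
\bigl\|\nabla f_i(x) - \nabla f_i(x\opt)\bigr\|^2 \le L^2 \|x - x\opt\|^2 \le \tfrac{2L^2}{\mu}\bigl(f(x) - f(x\opt)\bigr),
\]
which is the claimed inequality. There is no real obstacle here; the only subtlety is that the bound uses $L$-smoothness of the individual $f_i$ but $\mu$-strong convexity of the aggregate $f$ — the assumption that each $f_i$ is $\mu$-strongly convex is in fact stronger than needed, only $\mu$-strong convexity of $f$ is used.
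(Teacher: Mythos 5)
Your proposal is correct and follows essentially the same route as the paper's proof: bound $\|\nabla f_i(x)-\nabla f_i(x\opt)\|^2$ by $L^2\|x-x\opt\|^2$ via Lipschitz smoothness of $f_i$, then bound $\|x-x\opt\|^2$ by $\tfrac{2}{\mu}(f(x)-f(x\opt))$ via strong convexity of $f$ together with $\nabla f(x\opt)=0$. Your closing observation that only strong convexity of the aggregate $f$ is needed is accurate but does not change the argument.
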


\begin{proof}
A standard result used frequently in the convex optimization literature is as follows:
$$ \| \nabla f_i(x) -  \nabla f_i(x\opt) \|^2 \le L^2 \| x - x\opt \|^2,$$
where $f_i$ is $L$-Lipschitz smooth. A proof for this inequality can be found in \cite{nesterov2013introductory} (Theorem 2.1.5 on page 56).

From \eqref{eq:strong_convexity_basic}, we get:
\begin{align}
\| x - x\opt \|^2 \le \frac{2}{\mu} \big( f(x) - f(x\opt) - \langle x-x\opt, \nabla f(x\opt) \rangle    \big) = \frac{2}{\mu} \big( f(x) - f(x\opt) \big),
\end{align}
using the property that $\nabla f(x\opt) = 0$. The desired result follows immediately.
\end{proof}

\subsection{Proof of Theorem \ref{random_access_thm}}

\begin{proof}
Let the update rule for CentralVR be denoted as
$$x^{k+1}_{m+1} = x^{k}_{m+1} - \eta v^{k}_{m+1}, $$
where we define:
$$v^{k}_{m+1} := \Big[  \nabla f_{i_k}(x_{m+1}^k)  - \nabla f_{i_k}( \tilde x_m^{i_k}) + \frac{1}{n}\sum_j \nabla f_j(\tilde x_m^j) \Big].$$

In this proof, we assume that the data indices are accessed randomly with replacement. Thus, $\tilde{x}_m^{i_k}$ denotes the last iterate when the $i_k$-th data index was chosen in or before the $m$-th epoch. Thus, conditioning on all $x$, $v^{k}_{m+1}$ is an unbiased estimator of the true gradient at $x_{m+1}^k$, i.e., we get:
\begin{align}
\eval{v^{k}_{m+1}} =   \nabla f(x_{m+1}^k)  -  \frac{1}{n}\sum_j \nabla f_j ( \tilde x_m^j) + \frac{1}{n}\sum_j \nabla f_j(\tilde x_m^j) =   \nabla f(x^k_{m+1}).
 \label{expected_v}
\end{align}

Conditioned on all history (all $x$), we first begin with the standard identity:
\begin{align}
\mathbb{E} \big[\|x^{k+1}_{m+1} -x\opt\|^2\big] &=  \eval{\|x^{k}_{m+1}- \eta v^{k}_{m+1} -x\opt\|^2}  \nonumber  \\
&= \|x^{k}_{m+1}-x\opt\|^2 - 2\eta   (x^{k}_{m+1} - x\opt)^T \mathbb{E} [v^{k}_{m+1}] +  \eta^2 \mathbb{E} \|  v_{m+1}^k\|^2 \nonumber \\
&= \|x^{k}_{m+1}-x\opt\|^2 - 2\eta   (x^{k}_{m+1} - x\opt)^T \nabla f(x^k_{m+1}) +  \eta^2 \mathbb{E} \|  v_{m+1}^k\|^2,
\label{basic_ineq}
\end{align}
where we use \eqref{expected_v}.

We now bound \eqref{basic_ineq}.
Using the definition of strong convexity in \eqref{eq:strong_convexity_basic}, we can simplify the inner product term in \eqref{basic_ineq} as
\begin{align}
(x\opt - x^{k}_{m+1} )^T \nabla f(x^k_{m+1})  \le  - (f(x^{k}_{m+1}) - f(x\opt) ) - \frac{\mu}{2}\|x\opt - x^{k}_{m+1} \|^2.
\label{prod_ineq}
\end{align}

We now bound the magnitude of the gradient term in \eqref{basic_ineq}:
\begin{align} 
\mathbb{E} \|v^{k}_{m+1}\|^2 =& \mathbb{E} \Big\|  \nabla f_{i_k}(  x_{m+1}^k)  - \nabla f_{i_k}(\tilde x_m^{i_k}) + \frac{1}{n}\sum_j \nabla f_j( \tilde x_m^j) \Big\|^2  \nonumber \\
=& \mathbb{E} \Big\|  \nabla f_{i_k}(  x_{m+1}^k)  - \nabla f_{i_k}(x\opt) + \nabla f_{i_k}(x\opt) - \nabla f_{i_k}(\tilde x_m^{i_k}) + \frac{1}{n}\sum_j \nabla f_j( \tilde x_m^j) \Big\|^2  \nonumber \\
 \le &  2 \mathbb{E} \big\|  \nabla f_{i_k}(x_{m+1}^k)  -   \nabla f_{i_k}(x\opt) \big\|^2 +  2 \mathbb{E} \Big\|  \nabla f_{i_k}( \tilde x_m^{i_k})   -   \nabla f_{i_k}(x\opt) \nonumber \\
& - \Big( \frac{1}{n}\sum_j \nabla f_j(\tilde x_m^j) -  \frac{1}{n}\sum_j \nabla f_j(x\opt) \Big) \Big\|^2 \nonumber \\
  = & 2 \mathbb{E} \big\|  \nabla f_{i_k}(x_{m+1}^k)  -   \nabla f_{i_k}(x\opt) \big\|^2 +  2 \mathbb{E} \Big\|  \nabla f_{i_k}( \tilde x_m^{i_k}) - \nabla f_{i_k}(x\opt) - \mathbb{E} \Big[  \nabla f_{i_k}( \tilde x_m^{i_k}) - \nabla f_{i_k}(x\opt) \Big] \Big\|^2 \nonumber \\
   \le & 2 \mathbb{E} \big\|  \nabla f_{i_k}(x_{m+1}^k)  -   \nabla f_{i_k}(x\opt) \big\|^2  +  2 \mathbb{E} \big\|  \nabla f_{i_k}( \tilde x_m^{i_k}) - \nabla f_{i_k}(x\opt) \big\|^2 \nonumber \\
  \le & 4L \big( f (x_{m+1}^k) - f(x\opt) \big) + \frac{4L^2}{\mu} \mathbb{E} \big( f (\tilde x_m^{i_k})  - f(x\opt) \big).
  \label{norm_grad}
\end{align}
The second equality uses the property that $\nabla f(x\opt) = 0$. The first inequality uses the property that $\| a + b \|^2 \le 2 \|a \|^2 + 2\|b\|^2$. The second inequality uses $\mathbb{E} \| \phi - \mathbb{E} \phi \|^2 = \mathbb{E} \| \phi \|^2 - \| \mathbb{E} \phi \|^2 \le \mathbb{E} \| \phi \|^2$, for any random vector $\phi$. The third inequality follows from Lemma \ref{popular_lemma} and Lemma \ref{lunch_lemma}.

We now plug \eqref{prod_ineq}  and \eqref{norm_grad}  into \eqref{basic_ineq} and rearrange to get 
\begin{align}
&\eval{\|x^{k+1}_{m+1}-x\opt\|^2}    +   2\eta  (1 - 2L \eta  ) \big(  f(x^{k}_{m+1})-f(x\opt) \big) \nonumber  \\
   &\le   \big\|x^{k}_{m+1}-x\opt \big\|^2  -\eta\mu \big\|x^{k}_{m+1}-x\opt \big\|^2  + \frac{4 L^2 \eta^2}{\mu} \mathbb{E} \big( f (\tilde x_m^{i_k})  - f(x\opt) \big).
    \label{aggregated_ineq}
\end{align}

Taking expectation on all $x$ and summing \eqref{aggregated_ineq} over all $k=0, 1, \dots, n-1$, we get a telescoping sum in $\big\|x^{k}_{m+1}-x\opt \big\|^2$ that yields:
\begin{align}
& \mathbb{E} \big\|x^0_{m+2} - x\opt \big\|^2 + 2 n \eta (1 - 2L \eta  ) \mathbb{E} \big( \overbar{f(x_{m+1})} - f(x\opt) \big)   \nonumber \\
   \le & \mathbb{E} \big\|x^0_{m+1} - x\opt \big\|^2    -  \eta \mu \sum_{k=0}^{n-1} \mathbb{E}  \big\|x^{k}_{m+1}-x\opt \big\|^2  + \frac{4 n L^2 \eta^2}{\mu} \mathbb{E} \big( \overbar{f (\tilde x_m)}  - f(x\opt) \big),
  \label{summed_update}
\end{align}
where we use the convention $x^n_{m} = x^0_{m+1}$, and define $\overbar{f(x_{m})}$ as $\overbar{f(x_{m})} := \frac{1}{n}\sum_{k = 0}^{n-1} f(x^k_{m})$.

We now observe that 
$$\mathbb{E}  \big\|x^0_{m+1} - x\opt \big\|^2 \le \sum_{k=0}^{n-1} \mathbb{E}  \big\|x^{k}_{m+1}-x\opt \big\|^2.$$

Thus we can rewrite
$$- \eta \mu \sum_{k=0}^{n-1} \mathbb{E}  \big\|x^{k}_{m+1}-x\opt \big\|^2 \le - \eta \mu \mathbb{E}  \big\|x^0_{m+1} - x\opt \big\|^2.$$

Substituting this in \eqref{summed_update}, we get:
\begin{align*}
& \mathbb{E} \big\|x^0_{m+2} - x\opt \big\|^2 + 2 n \eta (1 - 2 L \eta ) \mathbb{E} \big( \overbar{f(x_{m+1})} - f(x\opt) \big) \\
 & \le (1 - \eta \mu) \mathbb{E} \big\|x^0_{m+1} - x\opt \big\|^2  + \frac{4 n L^2 \eta^2}{\mu} \mathbb{E} \big( \overbar{f (\tilde x_m)}  - f(x\opt) \big).
\end{align*}

We can rewrite this to get:
\begin{align*}
& \mathbb{E} \big\|x^0_{m+2} - x\opt \big\|^2 + 2 n \eta (1 - 2 L \eta ) \mathbb{E} \big( \overbar{f(x_{m+1})} - f(x\opt) \big) \\
&  \le \alpha \Big( \mathbb{E} \big\|x^0_{m+1} - x\opt \big\|^2  + 2 n \eta (1 - 2 L \eta ) \mathbb{E} \big( \overbar{f (\tilde x_m)}  - f(x\opt) \big) \Big),
\nonumber
\end{align*}
where we define $\alpha$ as:
$$\alpha := \max \Big( 1 - \eta \mu, \frac{4 n L^2 \eta^2}{2 n \mu \eta (1 - 2 L \eta )} \Big).$$

The result immediately follows.
\end{proof}

{\small
\bibliography{references}
\bibliographystyle{plain}
}

\end{document}